\newcommand*\foobar{\includegraphics[width=16pt]{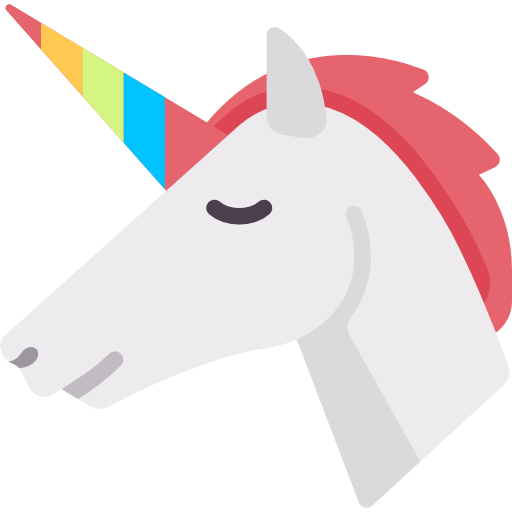}}
\newcommand{\starcup}{$\sqcup$\kern-0.58em{$\star$}}
\crefname{section}{Sec.}{Secs.}
\Crefname{section}{Section}{Sections}
\Crefname{table}{Table}{Tables}
\crefname{table}{Tab.}{Tabs.}
\newtheorem{prop}{Proposition}
\begin{document}
\title{\foobar \,UNICORN: Ultrasound Nakagami Imaging via Score Matching and Adaptation}
\titlerunning{\foobar \,UNICORN}
% If the paper title is too long for the running head, you can set
% an abbreviated paper title here
%
%%\aut{Kwanyoung Kim$^{*1}$, \;  Yujin Oh$^{*1}$, Jong Chul Ye$^{1}$,  \\
%%	Kim Jae Chul Graduate School of AI, KAIST$^{1}$ \\ 
%	{\tt\small \{cubeyoung, yujin.oh, jong.ye\}@kaist.ac.kr }

\author{Kwanyoung Kim$^{*1}$, \; Jaa-Yeon Lee$^{*1}$, \; Jong Chul Ye$^{1}$,} 
\institute{Kim JaeChul Graduate School of AI, KAIST, Daejun, Repubilc of Korea$^{1}$
\email{\tt\small \{cubeyoung, jaayeon, jong.ye\}@kaist.ac.k } }

\authorrunning{Kwanyoung Kim et.al}

% First names are abbreviated in the running head.
% If there are more than two authors, 'et al.' is used.
%
% \institute{Princeton University, Princeton NJ 08544, USA \and
% Springer Heidelberg, Tiergartenstr. 17, 69121 Heidelberg, Germany
% \email{lncs@springer.com}\\
% \url{http://www.springer.com/gp/computer-science/lncs} \and
% ABC Institute, Rupert-Karls-University Heidelberg, Heidelberg, Germany\\
% \email{\{abc,lncs\}@uni-heidelberg.de}}
%
\maketitle              % typeset the header of the contribution
\begin{abstract}

Nakagami imaging holds promise for visualizing and quantifying tissue scattering in ultrasound waves, with potential applications in tumor diagnosis and fat fraction estimation which are challenging to discern by  conventional ultrasound B-mode images. Existing methods struggle with optimal window size selection and suffer from estimator instability, leading to degraded resolution images. To address this, here we propose a novel method called UNICORN (\textbf{U}ltrasound \textbf{N}akagami \textbf{I}maging via S\textbf{cor}e Matching and Adaptatio\textbf{n}), that offers an accurate, closed-form  estimator for Nakagami parameter estimation in terms of the score function of ultrasonic envelope. 
Extensive experiments using simulation and real ultrasound RF data demonstrate UNICORN's superiority over conventional approaches in accuracy and resolution quality.

\keywords{Ultrasound Nakagami Imaging  \and Denoising Score Matching  \and Parameter estimation}
\end{abstract}
\section{Introduction}
% what is QUS?
\def\thefootnote{*}\footnotetext{These authors contributed equally to this work}

Ultrasound imaging is an essential tool for providing real-time qualitative descriptions of tissue morphology.
While ultrasound B-mode imaging offers qualitative information about tissue properties, its ability to provide precise tissue characterization for clinical decision-making is often limited. 

To overcome this limitation, quantitative ultrasound (QUS) methods have emerged. By analyzing statistical features of ultrasound radiofrequency (RF) echo signals, including tissue backscatter~\cite{oelze2016review} and ultrasonic wave attenuation~\cite{bigelow2005estimation}, QUS enables finer distinctions among tissue types beyond the capabilities of B-mode imaging alone. 
Since backscattered US signals encapsulate the characteristics of scatterers within tissue, such as their shape, size, density, and other properties~\cite{bamber1981acoustic,insana1990describing}, QUS methods utilizing these signals have been explored to visualize ultrasound wave scatter properties within tissue~\cite{tsui2007imaging}. 

Initially, the Rayleigh distribution was employed to model the backscattered signal data. However, scatterers in tissue exhibit randomly varied scattering patterns ranging from pre-Rayleigh to post-Rayleigh distribution~\cite{shankar1995model,dutt1994ultrasound}. Therefore, to better explain the backscattering behaviors and cover these variants of distribution, the Nakagami distribution has been studied as a general statistical model since the corresponding Nakagami parameter estimated from the backscattered echoes can be used to identify various backscattering distributions in medical ultrasound~\cite{zhang2012feasibility}. 
% , as illustrated in \cref{fig:intro}.
It has been shown that using Nakagami parameter mapping assists in detecting and characterizing abnormalities, such as those in the breast, liver, and kidney~\cite{shankar2000general}. Furthermore, it helps facilitate the detection of scatterer concentrations and arrangements, enhancing tumor characterization, and improving the classification of benign and malignant breast tumors~\cite{liao2011classification,tsui2010ultrasonic}. 

Previous Nakagami imaging studies in US commonly utilize moment-based and maximum-likelihood (ML) estimators for Nakagami distribution parameter mapping. In the moment-based approach, Nakagami parameters are computed within a sliding local window, typically optimized at three times the transducer pulse length for optimal window size~\cite{tsui2007imaging}. In contrast, ML estimator-based Nakagami imaging yields more consistent results with smaller variances~\cite{cheng2001maximum}. Window-modulated compounding (WMC) Nakagami imaging employs a moment estimator with a range of local window sizes, enhancing smoothness~\cite{tsui2014window}. However, both moment and ML estimators rely on the sliding window technique, necessitating a trade-off between image resolution and estimator stability~\cite{larrue2011nakagami}. Increasing window size improves smoothness but reduces resolution, while smaller sizes offer finer resolution at the expense of stability.

% without sliding window -> direct nakagami parameter mapping -> no compromise resolution, high stability
% 

To address this, here we introduce a novel  framework for \textbf{U}ltrasound \textbf{N}akagami \textbf{i}maging via S\textbf{cor}e Matching and Adapatio\textbf{n}, termed UNICORN. 
Inspired by the success of self-supervised denoising approach that utilizes  the score function, i.e. the gradient of loglikelihood \cite{kim2021noise2score,kim2022noise},
this framework offers a closed-form solution for mapping Nakagami parameters in terms of the {\em score function} of the measurement. Specifically, by integrating the score function of RF envelope signals, UNICORN directly computes Nakagami images, eliminating the need for the sliding window technique. Consequently, our proposed technique preserves ultrasound imaging resolution in Nakagami imaging while ensuring stability. Our contributions can be summarized as:

\begin{itemize}
\item[-] We provide a novel closed-form solution for estimating the Nakagami parameter in terms of the score function of RF envelope data, which completely overcomes the limitations of conventional methods.
\item[-] Through extensive experiments including simulation and real RF envelope data, our proposed method, called UNICORN,  demonstrates its superiority in estimation performance and the classification of benign and malignant breast tumor. 
\end{itemize}

% https://aapm.onlinelibrary.wiley.com/doi/full/10.1118/1.3566064
% the average urn:x-wiley:00942405:media:mp6064:mp6064-math-0041 parameters for benign and malignant tumors were 0.68 and 0.54
\begin{figure}[t]
  \begin{center}
    \includegraphics[width=1\textwidth]{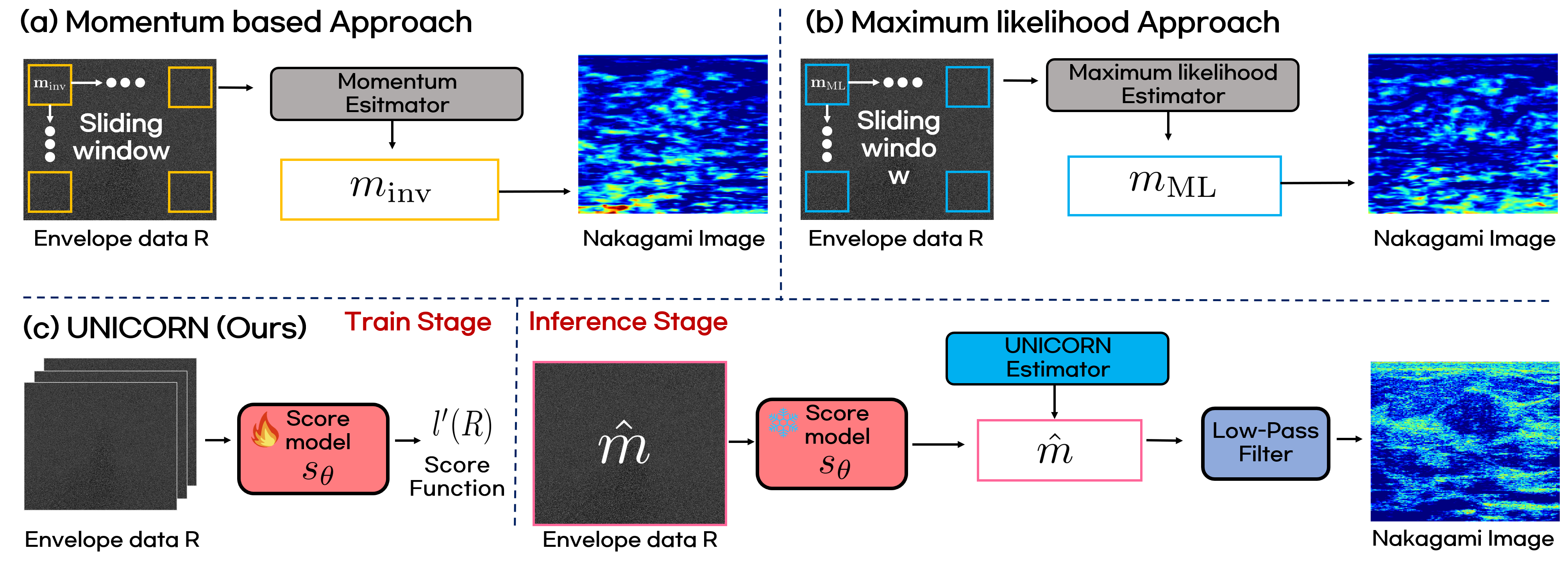}
     \vspace{-1em}
  \end{center}
  \caption{
Nakagami imaging using conventional methods and our UNICORN framework.
(a) Momentum-based approach uses a sliding window for Nakagami parameter calculation.
(b) Maximum likelihood method obtains Nakagami image through ML estimation.
(c) UNICORN consists of two stages: training a score model to learn RF envelope score function, and inference step estimates Nakagami image in terms of score function. }
  \vspace{-1em}
  \label{fig:main}
\end{figure}

\section{Background}

The probability density function of the ultrasound backscattered envelope $R$ under the Nakagami statistical model can be described as follows:
\begin{align}
    p_{R}(r) = \frac{2}{\Gamma(m)}\left(\frac{m}{\Omega}\right)^{m}r^{2m-1}\exp\left(-\frac{m}{\Omega}r^2\right)\mathcal{U}(r) 
    \label{pd:naka}
\end{align}
where $r$ is a value of the random variable $R$,
$\Gamma(\cdot)$ and $\mathcal{U}(\cdot)$ represent the Gamma function and the unit step function, respectively. $\Omega$ denotes the second moment, \textit{i.e.} $\Omega$ = $\mathbb{E}[R^{2}]$, while $m$ is the Nakagami parameter that determines the statistical distribution of the ultrasound backscattered envelope. Specifically, a Nakagami parameter ranging from 0 to 1 indicates a transition from a pre-Rayleigh to a Rayleigh distribution, while a parameter larger than 1 implies that the statistics of the backscattered signal conform to post-Rayleigh distributions.
The Nakagami parameter $m$ with momentum estimator can be calculated given by:
\begin{align}
    m_{\text{inv}} = \frac{[\mathbb{E}(R^2)]^2}{\mathbb{E}[R^2 - \mathbb{E}(R^2)]^2} \label{m_estimator}
\end{align}
where $m_{\text{inv}}$ denotes the esimated Nakagmi paramter by the momentum estimator \cite{tsui2007imaging,tsui2010ultrasonic,tsui2014window}. 
% In ML estimator approach~\cite{cheng2001maximum}, the Nakagami parameter $m_{\text{ML}}$ are determined by maximizing the likelihood function of \cref{pd:naka} with respect to $m$.
On the other hand, in the maximum likelihood (ML) estimator approach \cite{cheng2001maximum}, $m_{\text{ML}}$ is determined by maximizing the likelihood function of \cref{pd:naka} with respect to $m$. Both estimators calculate the local Nakagami parameter within a square window of a specific size and assign it to the new pixel located at the center of the window. This process is repeated until the window covers the entire envelope image by sliding across it, as illustrated in \cref{fig:main} (a) and (b).
% The use of a small window size enhances image resolution but leads to instability in the estimator. Conversely, employing a larger window size provides a more stable estimator but compromises image resolution. It implies that window-based approaches have a limitation, as determining the optimal window size for specific data can be challenging.

\section{Methods}

Recent works of Noise2Score \cite{kim2021noise2score,kim2022noise} provided a highly
efficient closed form formula for self-supervised image denoising using Tweedies' formula that utilizes the score function, i.e. the gradient
of loglikelihood of the measurement.
Inspired by this, here we introduce a novel framework for Ultrasound Nakagami Imaging with Score Matching and Adaptation, termed UNICORN. UNICORN consists of two steps: firstly, we learn the score function of the ultrasonic envelope data via denoising score matching loss. Then, we estimate the Nakagami parameter $m$ per pixel, followed by a low-pass filter, and reconstruct the Nakagami Imaging, as illustrated in \cref{fig:main}.
%In Section \ref{sec:estimator}, we provide details of our proposed estimator for the Nakagami parameter. In Section \ref{sec:loss}, we introduce the loss function for learning the score function.

\subsection{Nakagami parameter Estimator with Score function} \label{sec:estimator}
Instead of directly maximizing the likelihood function with respect to $m$ in the existing ML estimator~\cite{cheng2001maximum}, which entails complex computation and approximation errors, we adopt to maximize the likelihood with respect to envelope data, $R$, and can obtain the following the closed-form estimator.
% The proofs are deferred to Supplemenatray Material. 
\begin{prop}
For the the given measurement model \eqref{pd:naka},
the estimate of the unknown Nakagami parameter $m$ is given by
\begin{align}
\hat m= \frac{\frac{1}{r} + \nabla_r \log p_{R}(r) }{\left(\frac{2}{r} - \frac{2r}{\hat \Omega}\right)}, \quad \text{where} \quad \hat \Omega = \mathbb{E}[R^2] .
\end{align}
where % $\hat m$ denoted the estimated Nakagami parameter by our proposed estimator.  
$\nabla_r \log p_{R}(r)$ is the score function of the RF envelope data $R$. 
% where $$ \hat \Omega = \mathbb{E}[R^2] .$$
\label{prop1}
\end{prop}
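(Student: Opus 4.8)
The plan is to establish the identity by a direct computation of the score function followed by a purely algebraic rearrangement, exploiting the fact that the Nakagami log-density becomes \emph{affine} in $m$ once it is differentiated with respect to $r$. First I would take the logarithm of \eqref{pd:naka} on the support $r>0$, where the unit step $\mathcal{U}(r)$ is constant and may be dropped:
$$\log p_R(r) = \log\tfrac{2}{\Gamma(m)} + m\log\tfrac{m}{\Omega} + (2m-1)\log r - \tfrac{m}{\Omega}r^2.$$
The terms $\log\tfrac{2}{\Gamma(m)}$ and $m\log\tfrac{m}{\Omega}$ carry no $r$-dependence and will vanish upon differentiation, so the awkward Gamma-function factor never needs to be handled. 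The jump of $\mathcal{U}$ at $r=0$ is irrelevant, since the estimator is evaluated at observed envelope values, which are almost surely strictly positive.

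Next I would differentiate with respect to $r$ to obtain the score function
$$\nabla_r \log p_R(r) = \frac{2m-1}{r} - \frac{2m}{\Omega}\,r.$$
The key observation that makes a closed form possible is that this expression is affine in $m$: collecting the coefficient of $m$ gives $\nabla_r \log p_R(r) = -\tfrac{1}{r} + m\big(\tfrac{2}{r} - \tfrac{2r}{\Omega}\big)$. Adding $\tfrac{1}{r}$ to both sides and dividing by the bracketed factor isolates $m$ exactly, and replacing $\Omega$ by its moment estimate $\hat\Omega=\mathbb{E}[R^2]$ yields the stated formula.

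I do not expect a genuine analytic obstacle, as the computation is elementary; the substance lies in the conceptual move rather than the algebra. The point worth emphasizing is why this qualifies as an estimator at all: although $p_R$ is unknown, the score $\nabla_r\log p_R(r)$ can be \emph{learned} from data via denoising score matching, so plugging the learned score into the isolated expression produces $\hat m$ with neither a sliding window nor an explicit maximization over $m$. The one place that demands care is the denominator $\tfrac{2}{r}-\tfrac{2r}{\hat\Omega}$, which vanishes at $r=\sqrt{\hat\Omega}$; there the numerator vanishes as well, so the ratio is a removable $0/0$ indeterminacy that must be treated in practice, consistent with the subsequent low-pass filtering step. Finally I would note that $\hat\Omega=\mathbb{E}[R^2]$ is exactly the defining second moment of the Nakagami law, so substituting the empirical moment is the sole source of approximation in what is otherwise an exact identity.
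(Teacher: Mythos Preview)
Your proposal is correct and follows essentially the same route as the paper's own proof: take the log of \eqref{pd:naka}, differentiate in $r$ to obtain $\nabla_r\log p_R(r)=(2m-1)/r - 2mr/\Omega$, rearrange linearly in $m$, and substitute $\hat\Omega=\mathbb{E}[R^2]$. Your additional remarks about the unit step, the removable indeterminacy at $r=\sqrt{\hat\Omega}$, and the role of score matching are sound elaborations, but the core argument is identical.
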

\begin{proof}
See \cref{proof1}.
\end{proof}

Notably, in the derivation of \cref{prop1}, no approximation is necessary, resulting in more accurate estimations compared to existing approaches. To enhance the algorithm's robustness against outliers, we also incorporate a low-pass filter, such as a median or average filter. Subsequently, we estimate the final Nakagami image, denoted as $m_{\text{UNICORN}}$, as follows:
\begin{align}
m_{\text{UNICORN}} = \texttt{low-pass Filter} (\hat m)
\end{align}
%where $m_{\text{UNICORN}}$ is the final Nakagami parameter obtained by our proposed method. 

% In this closed-form formula, no numerical approximation is required, resulting in more accurate estimations per pixel compared to existing methods. By employing the AR-DAE loss function, we can derive the score function $\nabla_r \ln p(r) = l'(r)$. This leads to our final estimation formula, which is given by:
% \begin{align}
%   m = \frac{\sum^{N}_{i=1}\frac{1}{r_i} + l'(r) }{\left(\sum^{N}_{i=1}\frac{2}{r_i} - \frac{2\sum^{N}_{i=1}r_i}{\hat \Omega}\right)}
% \end{align}

\subsection{Loss function for Denoising Score Matching} \label{sec:loss}

To learn the score function from the ultrasound backscattered envelope $R$, 
we employ the amortized residual DAE (AR-DAE), which is a stabilized implementation of denoising autoencoder \cite{vincent2010stacked}.
%and has been successfully used in self-supervised denoising \cite{kim2021noise2score,kim2022noise}.
Specifically,  AR-DAE loss function is defined by:
\begin{align}
\underset{\Theta}{\arg \min }  ={\underset{R \sim p_{R}(r)}{\mathbb{E}}}\|u + \sigma_a s_\theta(R + \sigma_a u)\|^2
\label{loss:ar-dae}
\end{align}
where $s_{\theta}$ is the score model parameterized by $\theta$, $u \sim \mathcal{N}(0,I)$, and $\sigma_a$ $\sim \mathcal{N}(0,\delta^2)$. $\sigma_a$ is perturbed noise which gradually decreases with an annealing schedule. Minimizing Eq. (\ref{loss:ar-dae}) provides the network $s_{\theta^{\ast}}$ which can directly estimate the score function of envelope data, $s_{\theta^{\ast}} = \nabla_{R}\log p_{R}(r) = l'(r)$.
Estimating the score function of measurements via \cref{loss:ar-dae} has been demonstrated to be both direct and stable \cite{kim2021noise2score,kim2022noise}. Therefore, we adopt this approach as the initial step of our method (see \cref{fig:main} (c)).
% Therefore, training a neural network with the AR-DAE loss function \eqref{loss:ar-dae} serves as a direct and stable way of estimating the score function, which serves as the first step of our method (see Fig. 1).
 
\vspace{-1em}
\section{Experiments}
In this work, we conduct two categories of experiments to validate the effectiveness of our proposed method. Firstly, we perform simulations using synthetic Nakagami distributions on grayscale image datasets and ultrasound image dataset. Secondly, we apply our method to real ultrasound RF envelope datasets to validate the effectiveness of breast tumor classification.
\vspace{-1em}
\subsection{Implementation Details}
To ensure a fair comparison with other methods, we compare our proposed method with other conventional approaches such as momentum estimator~\cite{tsui2007imaging}, window-modulated compounding estimator~\cite{tsui2014window} and maximum likelihood approach~\cite{cheng2001maximum}. The detail of the model and hyper-parameter is available in \cref{implement}.

\vspace{-1em}
\subsection{Datasets}
\textbf{Synthetic Simulation Experiments} We conduct evaluations on both the MNIST and the BUSI ultrasound image datasets. For the MNIST dataset, we train our neural network to learn the score function using the training set and evaluate its performance on the test set.
For the BUSI dataset~\cite{al2020dataset}, which consists of breast ultrasound B-mode images from women
%and is originally designed for classification of benign, malignant, and normal patients, 
we adopt it for our ultrasound image dataset. 
% The dataset comprises 780 images with an average size of 500 $\times$ 500 pixels. We randomly select 80$\%$ of the dataset for training and set aside the remaining 20$\%$ for testing. 
For simulation experiments, we initially normalize images from the range of 0 to 1 to the range of 0.5 to 2 and set it as the ground truth $m$. Subsequently, we apply the Nakagami distribution (as described in \cref{pd:naka}) per pixel to generate synthetic measurement with $\Omega = 1$.

\noindent\textbf{Real RF Envelope Experiment} For this setting, we utilize the OASBUD dataset, which comprises 100 ultrasound (US) images corresponding to 48 benign and 52 malignant breast masses, with one mass per image~\cite{piotrzkowska2017open}. 
% These images were acquired from patients at the Maria Sklodowska-Curie Memorial Cancer Centre and Institute of Oncology in Warsaw, Poland. 
The dataset includes the backscattered signal and corresponding mask indicating the presence of a tumor. 
% We randomly select 80$\%$ of the dataset for training and set aside the remaining 20$\%$ for testing. 
The detail of the dataset is available in \cref{implement}.

\subsection{Evaluation Metrics}
\textbf{Synthetic Simulation Experiments} Since we generate the synthetic measurement under Nakagmi distribution by using ground truth image, we evaluate the performance of simulation results using PSNR (Peak Signal-to-Noise Ratio) and RMSE (Root Mean Square Error) metrics across various methods. 

\noindent\textbf{Real RF Envelope Experiment}\label{sec:eval}
In this experiment, we are unable to access the ground truth data for Nakagami imaging, preventing quantitative evaluation against a reference. Nonetheless, prior research suggests that both benign and malignant tumor tissue conforms to a pre-Rayleigh distribution, characterized by Nakagami parameters smaller than 1. Furthermore, malignant tumors tend to exhibit a more evident pre-Rayleigh distribution in their backscattered signals compared to benign tumors~\cite{tsui2010classification,larrue2014modeling}. Therefore, we assess qualitative results based on the closeness of Nakagami parameters to 0, indicating distinct pre-Rayleigh distributed outcomes. Additionally, we conduct an analysis of the differences between benign and malignant tumors by comparing the average and standard deviation obtained from each approach.

\begin{table}[!t]
	\centering
	\caption{Quantitative simulation results for MNIST dataset and ultrasound image BUSI dataset using various methods. The \textbf{bold} numbers indicate the best performance. WS denotes sliding window size.}% The bold numbers indicate the best performance.}	
	\resizebox{0.95\linewidth}{!}{
		\begin{tabular}{ccccccc}
			\toprule
			Dataset & \multicolumn{3}{c}{MNIST} & \multicolumn{3}{c}{BUSI} \\ 
            \cmidrule(r){1-1} \cmidrule(r){2-4} \cmidrule(r){5-7} 
			Metric & WS & PSNR (dB) $\uparrow$ & RMSE $\downarrow$ & WS & PSNR (dB) $\uparrow$ & RMSE $\downarrow$ \\ 
			\cmidrule(r){1-1} \cmidrule(r){2-4} \cmidrule(r){5-7} 
		  Measurement  & - & 9.17 & 0.695 & - & 10.14 & 0.651 \\ 	
   		Momentum & 9 & 21.01 & 0.179 & 9 & 18.58 & 0.230 \\ 
            Momentum & 11 & 21.28 & 0.173 & 13 & 21.22 & 0.174 \\
            Momentum & 13 & 20.78 & 0.184 & 25 & 23.31 & 0.132 \\
            MLE & 11 & 20.45 & 0.191 & 25 & 22.65 & 0.148\\ 	
            WMC & 9,11,13 & 21.69 & 0.165 & 9,13,25 & 22.28 & 0.154 \\ 	
            WMC & 7,9,11,13,15 & 21.53 & 0.168 & 9,13,17,21,25 & 23.01 & 0.141 \\ 	
            \cmidrule(r){1-1} \cmidrule(r){2-4} \cmidrule(r){5-7}
			UNICORN (Ours)  & - & \textbf{28.28}(\textcolor{blue}{+6.75}) & \textbf{0.077}(\textcolor{blue}{-0.091}) & - & \textbf{25.72}(\textcolor{blue}{+2.11}) & \textbf{0.011}(\textcolor{blue}{-0.130}) \\ 			
			\bottomrule    
		\end{tabular}
	}
	\vspace{-0.5em}
	\label{tbl:main}
\end{table}

\begin{figure*}[!t]
\centering
\includegraphics[width = 0.95\linewidth]{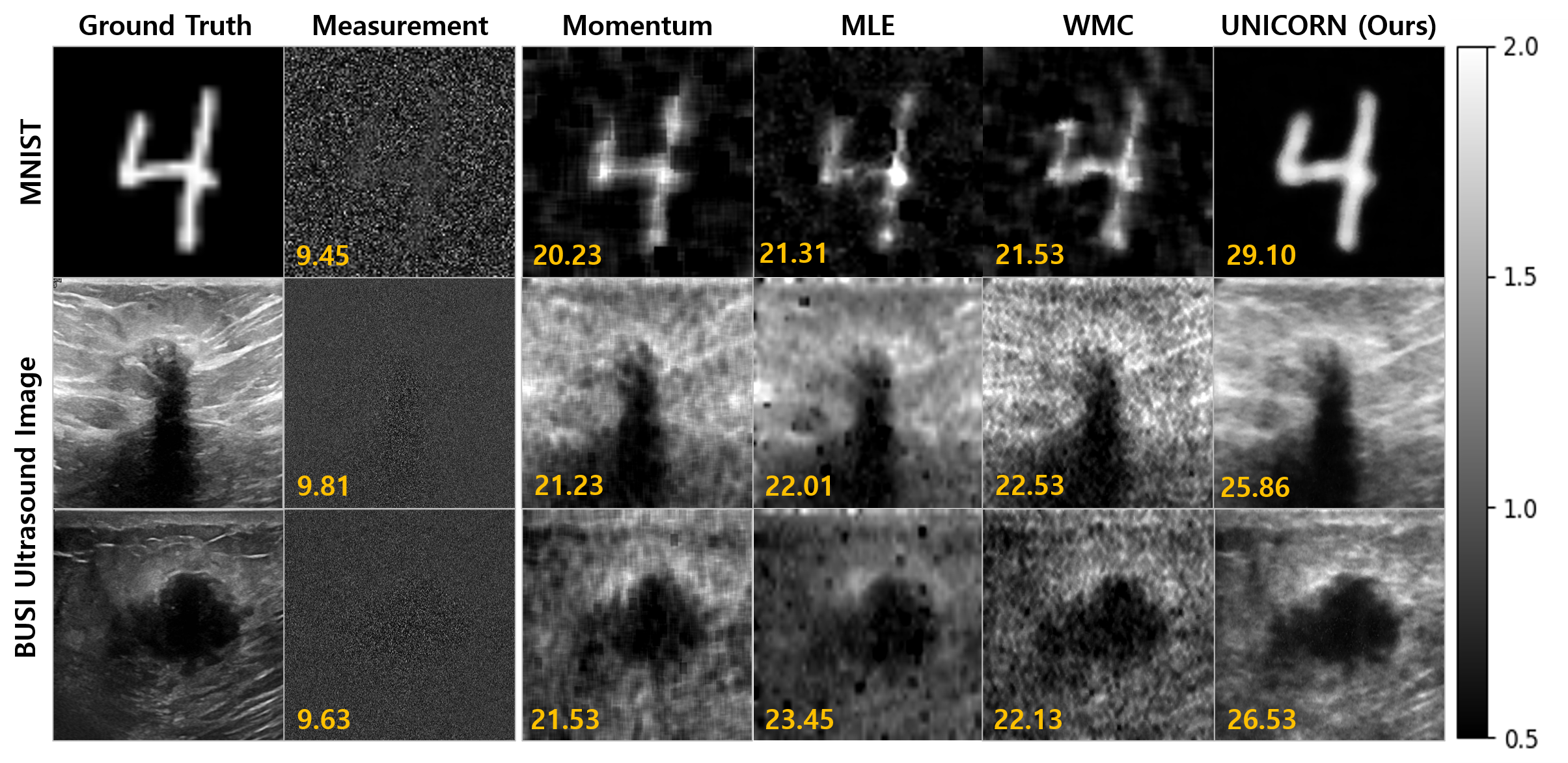}
\caption{Comparison of qualitative results across various methods: (Row 1) MNIST dataset, (Rows 2-3) BUSI Ultrasound Image dataset. Comparison against Momentum, MLE, WMC, and {UNICORN}. The yellow numbers indicate PSNR. To visualize the results, we normalize the images and convert them to grayscale.}
\label{fig}
\vskip -0.2in
\end{figure*}

\subsection{Results on Synthetic Simulation dataset}
In \cref{tbl:main}, we provide a comparison of quantitative performance using various methods on both the MNIST and BUSI datasets in terms of PSNR and RMSE. For the MNIST dataset, the PSNR of the measurement method is only 9.17 dB compared to the ground truth, indicating the highly ill-posed nature of the problem. %Momentum-based approaches demonstrate enhanced performance with increasing window size, consistently yielding better results. 
Momentum-based approach results were sensitive to varied window sizes. Since MLE needs higher computational cost compared to the momentum-based approach, we select the best window size from momentum-based approaches for MLE and set the step size to half of the window size. 
% Additionally, WMC approaches outperform other conventional methods.
Our proposed method, UNICORN, does not require any window size optimization but still achieves the highest PSNR of 28.28 dB and the lowest RMSE of 0.077, surpassing all other methods. UNICORN also outperforms the existing state-of-the-art method by a significant margin of +6.75 dB in PSNR. Similarly, on the BUSI dataset, UNICORN achieves superior performance with a PSNR of 25.71 dB and an RMSE of 0.089, representing a margin of +2.7 dB in PSNR.

In \cref{fig}, we compare the qualitative results of our proposed method against existing methods. We observe that the results obtained with Momentum-based approaches and the MLE approach are too blurred out and exhibit artifacts, such as the tile effect. While the WMC approach mitigates blurring to some extent, it introduces speckle noise artifacts. In contrast, our proposed method is capable of closely estimating Nagakami parameter mapping compared to the ground truth label in MNIST and provides superior resolution without significant artifacts in BUSI dataset.

\begin{figure*}[!t]
\centering
\includegraphics[width = 0.96\linewidth]{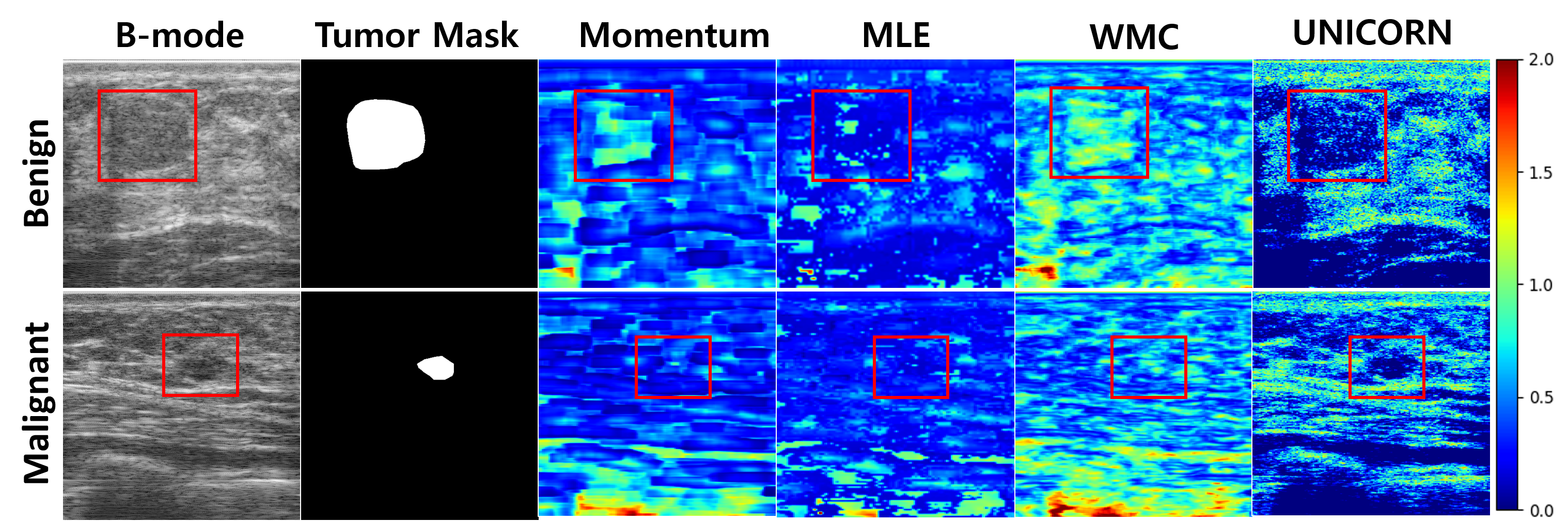}
\vspace{-0.5em}
\caption{Comparison of qualitative results across various methods: (Row 1) benign tumor, (Row 2) malignant tumor example. Comparison against Momentum, MLE, WMC, and {UNICORN}. The red line indicates the corresponding ROI of tumor.}
\label{fig:oasbud}
\vspace{-0.8em}
\end{figure*}

\begin{figure}[!t]
    \centering
    \begin{subfigure}{0.48\textwidth} % adjust the width as needed
        \centering
        \includegraphics[width=1\textwidth]{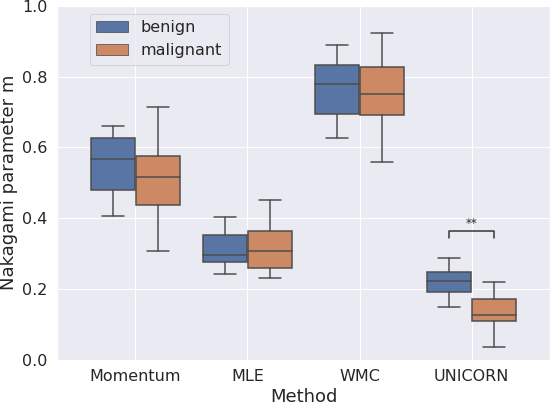}
        \label{fig:sub1}
        \caption{}
    \end{subfigure}\hfill
    \begin{subfigure}{0.45\textwidth} % adjust the width as needed
        \centering
        \includegraphics[width=1\textwidth]{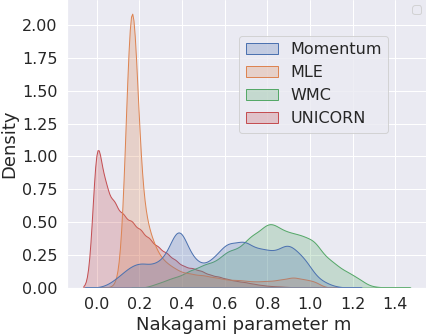}
        \caption{}
        \label{fig:sub2}
    \end{subfigure}
    \vspace{-1em}
    
    \caption{Comparison of Nakagami parameters within breast tumor ROIs. (a) box plots illustrating the Nakagami parameter distribution for benign and malignant tumors. (b) Histogram depicting the distribution of Nakagami parameters within benign tumors (refer to Figure \ref{fig:oasbud}).}
    \label{fig:anlaysis}
    \vspace{-1em}
\end{figure}

\subsection{Results on Real RF Envelope dataset}

Now, using RF envelop dataset from real US imaging,  we compare qualitative results with various methods as illustrated in Figure \ref{fig:oasbud}. The sliding window size for conventional methods was set to three times the pulse length. Three different window sizes from the pulse length to three times the pulse length were used to compute WMC. Our findings indicate that our proposed method yields significantly distinct features within the tumor by producing Nakagami parameters smaller than 1. Moreover, conventional approaches fail to capture the detailed contour of the tumor compared to B-mode images. In contrast, our proposed method can provide the tumor contour while preserving resolution. Furthermore, our method yields consistent results on both benign and malignant tumors. 

\cref{fig:anlaysis}(a) shows box plots of the Nakagami parameter for benign and malignant tumors compared to other approaches. In nearly all approaches, the average Nakagami parameter falls below 1, suggesting a pre-Rayleigh distribution. Our method exhibits the closest average value to 0 against the other approaches. Furthermore, \cref{fig:anlaysis}(b) depicts a histogram illustrating the Nakagami parameter of benign tumors across various approaches. While momentum-based methods encompass distributions ranging from pre-Rayleigh to post-Rayleigh, our approach aligns with the pre-Rayleigh distribution when compared to baseline methods.

\section{Discussion and Conclusion}

In this work, we introduced UNICORN, a novel framework for ultrasound Nakagami Imaging, addressing limitations of existing methods in visualizing tissue scattering of ultrasound waves. Our method incorporates the score function of a beamformed radiofrequency envelope and its signals, providing a closed-form estimator per pixel followed by low-pass filter adaptation. 
%The conventional approaches use a sliding square window to calculate the Nakagami parameter, which inherently focuses on transitions of tissue within the window but is unable to visualize the parameter at each local pixel. In contrast, our pixel-level estimator captures the Nakagami parameter directly at each pixel, offering a distinct contrast between our method and previous approaches. Importantly, our method provides detailed tumor tissue contours and distinct pre-Rayleigh distributions, aiding in distinguishing between benign and malignant tumors compared to baselines.
Traditional methods typically employ a sliding square window to compute the Nakagami parameter, focusing primarily on tissue transitions within the window. This approach, however, does not allow for visualization of the parameter at the individual pixel level. In stark contrast, our pixel-level estimator method is designed to capture the Nakagami parameter directly at each pixel, marking a clear differentiation from prior techniques. 
Accordingly, a key advantage of our approach is its ability to provide detailed contours of tumor tissue and distinct pre-Rayleigh distributions. This capability significantly enhances the differentiation between benign and malignant tumors, offering a more nuanced and accurate diagnostic tool compared to conventional baseline methods.
In our simulation experiments, UNICORN demonstrated superior performance over traditional methods, achieving a significant margin of improvement. 
%In simulation experiments, our UNICORN outperformed conventional approaches by a large margin and was able to accurately estimate the Nakagami parameter.
 Furthermore,  by applying real ultrasound envelope data we demonstrate that our proposed approach can be extended to breast tumor detection. We believe that our framework holds promise for various applications in tumor diagnosis and fat fraction estimation, paving the way for advancements in ultrasound imaging techniques.

%
% ---- Bibliography ----
%
% BibTeX users should specify bibliography style 'splncs04'.
% References will then be sorted and formatted in the correct style.
%
%

%\bibliographystyle{splncs04}
%\bibliography{reference}

\clearpage
\newpage
\setcounter{page}{1}
\setcounter{section}{0}
\setcounter{figure}{0}
\setcounter{table}{0}
\appendix
\section{Proof of \cref{prop1}} \label{proof1}
\begin{proof}
For a given Nakagmi distribution ~\cref{pd:naka}, the log-likelihood function can be described as 
\begin{align}
    \log p_{R}(r) =-\log(\Gamma(m)) + m \log(m) - m\log(\Omega) + (2m-1)\log(r) -\frac{m}{\Omega}r^2 \label{log_liklihood}
\end{align}
Taking the derivative of \cref{log_liklihood} with respect to $r$, and setting it equal to zero, we have,    
\begin{align}
   \nabla_r \log p_{R}(r)= (2m-1) \frac{1}{r} -\frac{2m}{\Omega}r 
\end{align}
Accordingly,
\begin{align}
  m\left(\frac{2}{r} - \frac{2r}{\Omega}\right) &= \frac{1}{r} + \nabla_r \log p_{R}(r) 
\end{align}
Furthermore, we have,
\begin{align}
 \hat \Omega = \mathbb{E}[R^2] 
\end{align}
Therefore, we have,
\begin{align}
   \hat m &= \frac{\frac{1}{r} + \nabla_r \log p_{R}(r) }{\left(\frac{2}{r} - \frac{2r}{\hat \Omega}\right)}, \quad \text{where} \; \hat \Omega = \mathbb{E}[R^2] 
\end{align}
This concluded the proof.
\end{proof}

\section{Additional Visual Result}
\begin{figure*}[!h]
\centering
\includegraphics[width = 1\linewidth]{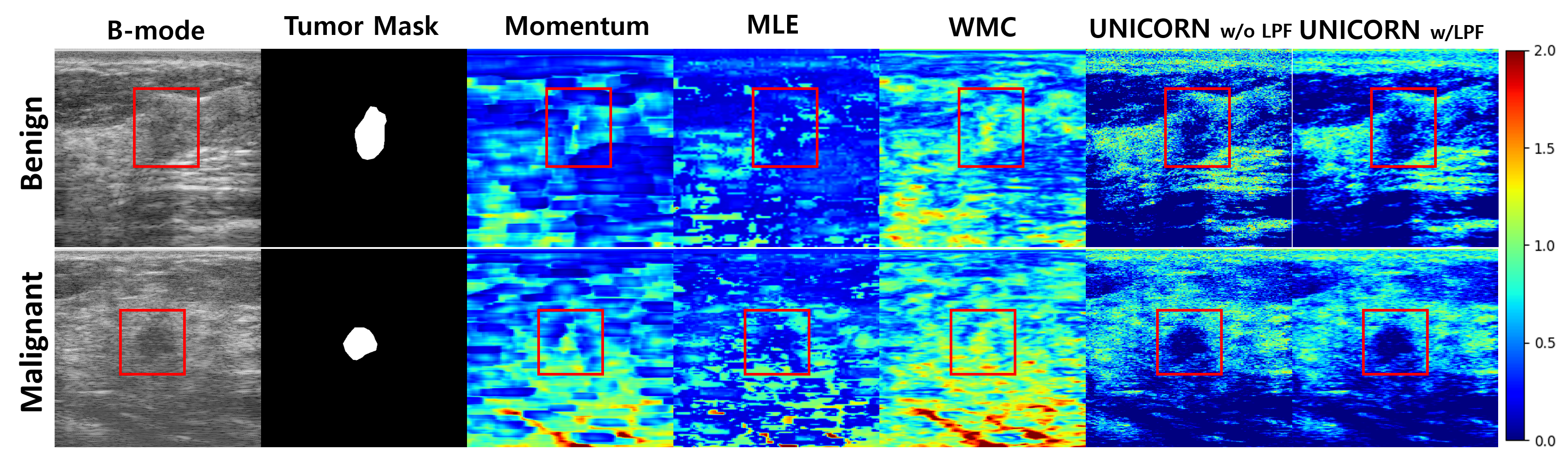}
\vspace{-0.5em}
\caption{Comparison of qualitative results across various methods: (Row 1) benign tumor, (Row 2) malignant tumor example. Comparison against Momentum, MLE, WMC, {UNICORN without low pass filters} and {UNICORN with a low pass filter}. The red line indicates the corresponding ROI of tumor.}
\label{fig:oasbud-sup}
\vspace{-0.8em}
\end{figure*}

\section{The Further detail of Implementation} \label{implement}
\begin{table}[!h] 
	\centering
	\caption{The Table representing implementation detail}% The bold numbers indicate the best performance.}	
	\resizebox{0.95\linewidth}{!}{
		\begin{tabular}{cccc}
        \toprule
        Dataset & \multicolumn{1}{c}{MNIST} & \multicolumn{1}{c}{BUSI}  & \multicolumn{1}{c}{OASBUD }\\ 
        \cmidrule(r){1-1} \cmidrule(r){2-4} 
        model & modifeid U-Net & modifeid U-Net & modifeid U-Net \\ 
        batchsize & 512 & 16 & 16 \\
        learning rate & 2e-4 &  2e-4 &   2e-4\\
        optimizer & AdamW &  AdamW &  AdamW\\
        total epoch & 100 & 50 & 50 \\
        low-pass filter & Median filter & Median filter & Average filter \\
        $\sigma_a^{\min}$, $\sigma_a^{\max}$ &0.01, 0.1 & 0.03, 0.1 & 0.03, 0.1 \\
        image resolution & 28 $\times$ 28  & 500 $\times$ 500 & 1580 $\times$ 610  \\
        total data size & 50,000 & 780 & 100  \\
        division for dataset & train / test  & randomly select 80$\%$ / 20$\%$ & randomly select 80$\%$ / 20$\%$   \\
    \bottomrule    
    \end{tabular}
	}
	\vspace{-0.5em}
	\label{tbl:main}
\end{table}

\end{document}